\def \x {\mathbf{x}}
\def \z {\mathbf{z}}
\def \O {\mathcal{O}}
\renewcommand{\tilde}{\widetilde}
\renewcommand{\hat}{\widehat}
\newtheorem{myLemma}{Lemma}
\newtheorem{myProp}{Proposition}
\theoremstyle{definition}
\newtheorem{myRemark}{Remark}
\renewcommand{\tilde}{\widetilde}
\renewcommand{\hat}{\widehat}
\def \x {\mathbf{x}}
\def \epsilon {\varepsilon}
\definecolor{DSgray}{cmyk}{0,1,0,0}
\newcommand{\savehyperref}[2]{\texorpdfstring{\hyperref[#1]{#2}}{#2}}
\theoremstyle{plain}
\theoremstyle{definition}
\theoremstyle{remark}
\icmltitlerunning{Generating CoTs with a pairwise-comparison approach to searching for the most promising intermediate thought}
\begin{document}

\twocolumn[
\icmltitle{Generating Chain-of-Thoughts with a Pairwise-Comparison Approach to\\ Searching for the Most Promising Intermediate Thought}



\icmlsetsymbol{equal}{*}

\begin{icmlauthorlist}
\icmlauthor{Zhen-Yu Zhang}{riken}
\icmlauthor{Siwei Han}{unc}
\icmlauthor{Huaxiu Yao}{unc}
\icmlauthor{Gang Niu}{riken}
\icmlauthor{Masashi Sugiyama}{riken,utokyo}
\end{icmlauthorlist}

\icmlaffiliation{riken}{Center for Advanced Intelligence Project, RIKEN}
\icmlaffiliation{unc}{University of North Carolina at Chapel Hill}
\icmlaffiliation{utokyo}{Graduate School of Frontier Sciences, The University of Tokyo}

\icmlcorrespondingauthor{Masashi Sugiyama}{sugi@k.u-tokyo.ac.jp}

\icmlkeywords{Machine Learning, ICML}

\vskip 0.3in
]



\printAffiliationsAndNotice{}  

\begin{abstract}
To improve the ability of the \emph{large language model}~(LLMs) to tackle complex reasoning problems, \emph{chain-of-thoughts} (CoT) methods were proposed to guide LLMs to reason step-by-step, enabling problem solving from simple to complex. State-of-the-art methods for generating such a chain involve interactive collaboration, where the learner generates candidate intermediate thoughts, evaluated by the LLM, guiding the generation of subsequent thoughts. However, a widespread yet understudied problem is that \emph{the evaluation from the LLM is typically noisy and unreliable}, potentially misleading the generation process in selecting promising intermediate thoughts. In this paper, motivated by Vapnik's principle, we use \emph{pairwise-comparison} evaluation instead of point-wise scoring to search for promising intermediate thoughts with the noisy feedback from the LLM. In each round, we randomly \emph{pair intermediate thoughts and directly prompt the LLM to select} the more promising one from each pair, allowing us to identify the most promising thoughts through an iterative process. To further alleviate the noise in the comparison, we incorporate techniques from ensemble learning and dueling bandits, proposing two variants of the algorithm. Experiments on three real-world tasks demonstrate the effectiveness of our proposed algorithm and verify the rationale of the pairwise comparison mechanism.
\end{abstract}


\section{Introduction}
\label{sec:intro}

\emph{Large language models} (LLMs), such as the GPT~\citep{brown2020language} and PaLM~\citep{chowdhery2023palm}, have recently demonstrated remarkable capabilities in a variety of real-world tasks. However, current LLMs still face limitations when dealing with complex tasks, especially those involving multi-step reasoning, such as mathematical or reasoning problems~\citep{rae2021scaling,NIPS2022:CoT}. To deal with such implicit complexity, \emph{chain-of-thoughts} (CoT) approaches were proposed~\citep{NIPS2022:CoT,ICLR2022:CoTensemble,NIPS2023:ToT}. These approaches were proposed to use an incorporation of intermediate steps of reasoning (intermediate ``thought''), enabling the LLM to reason progressively, first generating intermediate solutions for simpler problems to incrementally improve its capacity to handle complicated tasks. Therefore, the key challenge is to design an effective CoT generation algorithm that guides the LLM towards desired solutions through step-by-step reasoning.

There is a fruitful line of work that considers the CoT generation problem. The pioneering work uses manual design prompts to let the LLM generate a CoT by itself~\citep{NIPS2022:CoT,ICLR2022:CoTensemble}. This line of research was recently extended by the \emph{score-based tree-of-thoughts} (S-ToT) approaches~\citep{NIPS2023:ToT,long2023:ToT_long}, where the CoT generation is framed as an \emph{interactive process} with the algorithm and the LLM. These approaches generate a set of candidate intermediate thoughts each round and ask the LLM to score them and select the most promising ones. The next thoughts are then generated based on these selected ones, creating a tree-like data structure. A search algorithm, such as deep-first search, is used to identify the most promising CoT in the tree (see the detailed illustration in Figure~\ref{fig:illustration}). 

While these methods have shown remarkable empirical success, they rely on an accurate score evaluation of each intermediate thought by the LLM. However, it is important to notice that: \emph{LLM scores are often noisy}. For example, the LLM may give different responses to different prompts, even though these prompts convey the same meaning~\citep{lu2022fantastically}. The noisy nature of LLM feedback introduces new problems in the selection of the most promising intermediate thoughts and the subsequent generation of the tree structure. Therefore, it is crucial to make the CoT generation algorithms robust to the noisy feedback from LLMs.

Several preliminary approaches have been proposed to mitigate such noise in the LLM feedback, including estimating uncertainty from the semantic aspect~\citep{kuhn2022semantic} or ensembling multiple thoughts~\citep{ICLR2022:CoTensemble}. However, getting an accurate point-wise estimate for each intermediate thought could be resource-intensive, requiring the construction of an additional model~\citep{paul2023refiner} or multiple queries~\citep{ICLR2022:CoTensemble}: see also Figure~\ref{fig:cost} and Table~\ref{tab:cost-QA} to~\ref{tab:cost-sudoku} in our experiments. Fortunately, in the context of CoT generation, our focus is on identifying the most promising chain. 
Motivated by Vapnik's principle~\citep{vapnik1991principles}, we do not need to solve a more general and difficult problem as an intermediate step, i.e. to estimate an accurate point-wise score for each intermediate thought. Instead, we can focus directly on identifying the most promising one in each round. However, it is still impractical to directly vote on all intermediate thoughts to identify the most promising one by LLMs due to the input length limit and the ``lost in the middle'' phenomenon~\citep{liu2024lost}. 

\emph{We argue that for LLMs, comparing two thoughts simultaneously provides a more robust evaluation than assigning individual scores}. We aim to leverage the comparison of two thoughts instead of evaluating a single thought in isolation, thereby providing a feasible alternative for identifying the most promising intermediate thought.
This argument is well established in human cognition, as seen in mathematical problems, where it is often more feasible to approximate which thought is better by comparison than by considering and evaluating them separately. We also observe similar phenomena that LLMs to generate a more reliable evaluation in the experiments on the Sudoku task, as shown in Figure~\ref{fig:examples}, where the LLM successfully identifies the better option given two intermediate solutions, but struggles to assign the correct value to intermediate thoughts individually.

Based on the above insights, we propose a pairwise comparison-based algorithm for CoT generation to alleviate the noise in the LLM feedback and to find the most promising intermediate thoughts each round. In each round, we randomly pair all the intermediate thoughts and directly ask the LLM to compare and select the more promising one from each pair, keeping the selected one and discarding the other. Then we repeat this procedure so that we get a small set of most promising intermediate thoughts, and subsequently, we generate the next thoughts based on these selected ones. This mechanism allows us to use a direct pairwise comparison to identify the promising thoughts with a more robust evaluation. We also propose to include previous thoughts in the tree structure for comparison to mitigate the noisy nature of LLM's feedback. Taking these two points into account, we frame the problem as an iterative process and propose a general CoT generation algorithm called \emph{comparison-based tree-of-thoughts} (C-ToT). To further model the noise in the comparison, we resort to the techniques of ensemble and best-arm identification with dueling feedback~\citep{ICML2017:Knockout} and propose two variants of the proposed C-ToT algorithm. Through experiments on three real-world reasoning problems, we demonstrate the effectiveness of our proposed approaches and verify the rationale of the pairwise comparison mechanism. Our main contributions are:
\begin{itemize}%
    \vspace{-4mm}
    \item[(1)] We investigate the problem of noisy feedback in the CoT generation, which is widespread but understudied.
    \vspace{-6mm}
    \item[(2)] Motivated by Vapnik's principle, we propose a pairwise-comparison based approach for CoT generation that exploits noisy feedback from the LLMs.
    \vspace{-2mm}
    \item[(3)] We proposed two variants of C-ToT that further account for different types of noise in the comparison feedback.
    \vspace{-4mm}
\end{itemize}

\begin{figure}
\begin{minipage}[b]{.99\linewidth}
\centering
\includegraphics[clip, trim=1.31cm 7.86cm 6.79cm 0.85cm, width=0.9\textwidth]{./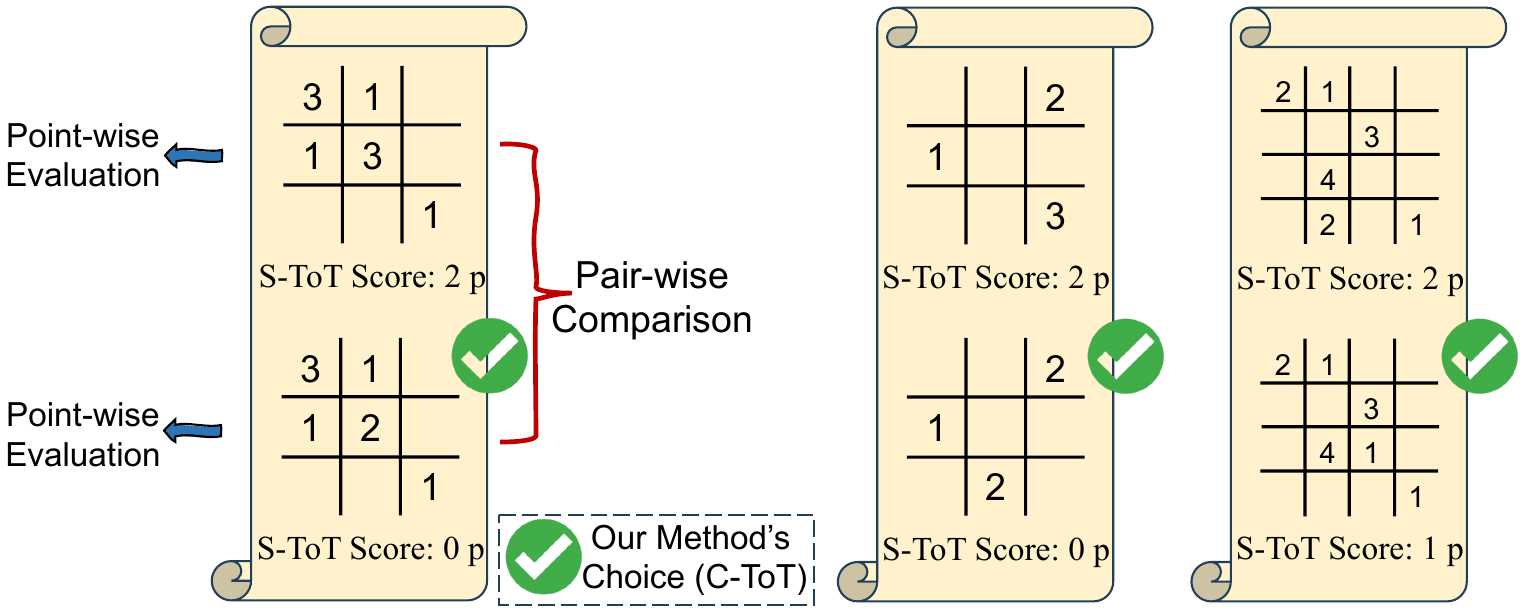}
\caption{A demonstration of point-wise evaluation vs. pair-wise comparison based on real experimental results in Sudoku puzzles. The point-wise evaluation algorithm (S-ToT) assigns hard scores to each intermediate thought (the higher, the better), while our proposed algorithm (C-ToT) uses pair-wise comparison to obtain the more promising thoughts (green tick). In these cases, the LLM assigns \textbf{incorrect scores}, but it makes a \textbf{correct comparison}.}
\label{fig:examples}
\end{minipage}
\vspace{-4mm}
\end{figure}


\section{Related Work}
\label{sec:related} 

\textbf{CoT Generation.} Generating appropriate CoT for LLMs to enhance their inference power is a critical problem in real-world applications. Previous work has explored task-specific training algorithms for identifying the CoT, including creating semantic graphs~\citep{xu2021exploiting}, refining the model through human-annotated CoT~\citep{cobbe2021training}, or learning an additional extractor using heuristic-driven pseudo CoT~\citep{chen2019multi}. Different from these approaches, the LLM-based CoT generation is used directly during inference, coupling the generation process with an LLM. In these approaches, the LLM guides the CoT generation, eliminating the need for additional training.

The pioneering work in LLM-based CoT generation introduces intermediate thoughts sequentially between the input query and LLM's response. By simply prompting the LLM to ``think step by step'', this strategy has been shown to significantly improve several tasks over directly asking the LLM the original question, such as mathematical puzzles~\citep{NIPS2022:CoT} or other general mathematical reasoning problems~\citep{drori2022neural}. Due to the noisy nature of the LLM feedback, robustness can be improved by using an ensemble of different CoTs~\citep{ICLR2022:CoTensemble}.

To further improve the effectiveness of CoT generation, the score-based tree-of-thoughts generation algorithm was introduced independently by~\citet{NIPS2023:ToT} and~\citet{long2023:ToT_long}. They model the CoT generation process as a tree generation and search process. A single node in the tree represents an intermediate thought. Starting from a given node, the thought generator constructs a set of new nodes and the LLM generates scores for each node as an evaluation. Finally, the timing of the tree expansion is determined by the search algorithm used (e.g., breadth-first or depth-first search). In addition, this search algorithm can also provide capabilities including backtracking from unpromising thoughts. Further research extended the tree structure to a graph, such as the graph-of-thoughts~\citep{besta2023:GoT}, allowing the distillation of knowledge about entire network of thoughts. However, these methods cannot handle the noisy evaluation feedback caused by the LLM itself.

\textbf{Self-Reflection.}
Rather than interacting with LLMs to generate a step-by-step reasoning chain, self-reflection approaches involve LLMs directly offering an initial thought chain to the query, followed by iterative refinement of the whole chain. \citet{madaan2023self} and~\citet{paul2023refiner} introduced the ``self-reflection'' mechanism, using the LLMs to provide feedback to their generation candidates and then fine-tuning. \citet{paul2023refiner} updated the model to explicitly generate intermediate thoughts while interacting with a critic model that provides automated feedback on the reasoning. These methods introduced new models to provide evaluation for the intermediate thoughts, but these critical models still do not always provide perfect evaluation. Furthermore, for complex problems that require sequential reasoning, such as the Game of 24, where the next thought should be generated and evaluated based on previous ones, the C-ToT generation could be more appropriate.

\textbf{Uncertainty Quantification in LLMs.}
This is a recent interest that aims to evaluate the confidence of a given answer by the LLM itself. Some work considered letting the LLM provide the confidence~\citep{JAIR2021:cleanlab,kadavath2022language} by retraining the model. Another line of work considered designing entropy-based measures~\citep{kuhn2022semantic}, or generating multiple outputs to obtain an uncertainty measure~\citep{ICLR2022:CoTensemble}. Although they can be included in the CoT generation, they introduce a high computational cost during testing, particularly when obtaining an accurate score for each intermediate thought.

\begin{figure*}
\begin{minipage}[b]{.99\linewidth}
\centering
\includegraphics[clip, trim=0.00cm 5.60cm 0.00cm 1.11cm, width=0.98\textwidth]{./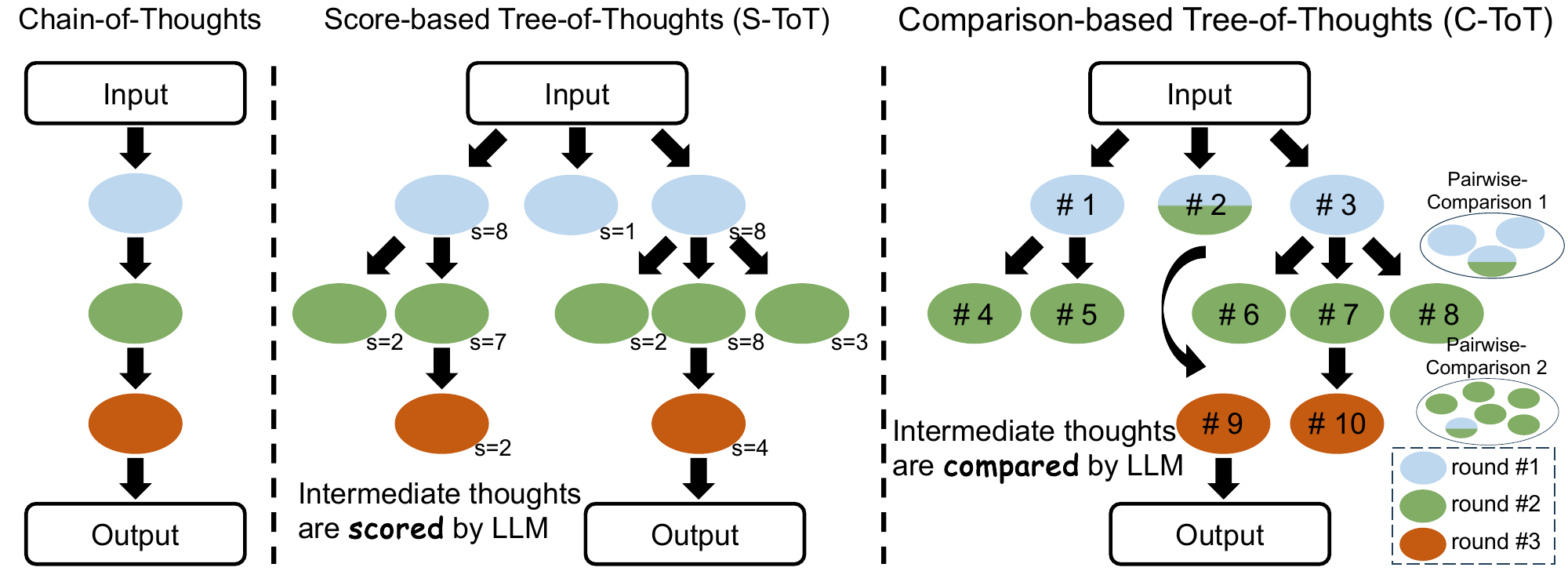}
\caption{Schematic illustration of previous CoT and S-ToT approaches with our proposed C-ToT approach for CoT generation with LLMs. Each circle box represents an intermediate thought, which is a coherent sequence of language or equations that serves as an intermediate step in problem solving. In the S-ToT method, \textbf{each intermediate thought is scored} by the LLM (denoted by $s$ in the figure), and the searching algorithm considers the highest-scoring ones as the most promising and then generates next intermediate thoughts based on them. In the C-ToT approach, we use \textbf{pairwise comparison} with the LLM in each round to find the most promising intermediate thoughts and then generate the next thoughts. Meanwhile, we include all previous intermediate thoughts in the comparison.}
\label{fig:illustration}
\end{minipage}
\vspace{-4mm}
\end{figure*}

\section{Our Approach}
\label{sec:method}
In this section, we first introduce the proposed comparison-based ToT generation algorithm, which is a general framework that generates CoT with a pairwise comparison mechanism to find the most promising intermediate thought. To further alleviate the noise in LLM's comparison feedback, we propose two different instantiations of our framework with theoretical analysis.

\subsection{CoT Generation via Pair-wise Comparison}
We first introduce the comparison-based ToT framework, where the key mechanism is the selection of the most promising thoughts among all candidates in each round.

We illustrate our proposed algorithm and compare it with previous approaches in Figure~\ref{fig:illustration}. The CoT approaches directly ask the LLM to generate a CoT. The S-ToT approaches ask the LLM to score each intermediate thought and select the highest-scoring ones to generate the next layer. Different from these methods, we propose a pairwise-comparison approach to searching for the most promising intermediate thoughts. Note that with LLMs, due to feedback noise and input limitations, we cannot do a listwise voting that directly asks the LLM to sort all the intermediate thoughts. Let $Z$ be the set of all candidate intermediate thoughts, and we want to select the most promising $K$ thoughts from it. The comparison iterates as follows: we randomly pair thoughts from the set and select only the winner in each pair, thereby halving the size of $Z_i$ to $|Z_i|/2$, where $Z_i$ denotes the set in the $i$-th iteration. After at most $K\times\log_2 |Z|$ rounds, we can identify the $K$ most promising intermediate thoughts by such direct comparison. In practice, we can do one iteration of comparison and keep the remaining $K$ thoughts in the last few rounds. For each pair, we compare thoughts $a$ and $b$ with the LLM by asking which one is better, using different prompts with $n$ times, where $n\geq 1$. We defer the implementation details to Section~\ref{sec:experiment-setup}.

We take previous unselected intermediate thoughts into comparison to explore possibly valuable but mis-evaluated thoughts caused by the noise in feedback. This is because the evaluation of intermediate thoughts may not always be accurate, and the generation of the tree structure may miss valuable intermediate thoughts in previous iterations. In the seminal research of S-ToT~\citep{NIPS2023:ToT,long2023:ToT_long,besta2023:GoT}, the thought generator uses a backtracking mechanism to revisit previous thoughts when the current ones fall below a certain threshold. While this strategy aims to rescue promising thoughts, its efficiency is questionable because it may delay the exploration of previously valuable but incorrectly scored thoughts. In addition, backtracking only occurs after a thought has fallen below a manually chosen threshold, which is hard to know in advance.

Motivated by these shortcomings and the efficiency of our pairwise-comparison mechanism, we maintain a repository of previous intermediate thoughts. At each round, we include previously unselected thoughts in the comparisons, rather than relying on a fixed threshold. As illustrated in Figure~\ref{fig:illustration}, during the pairwise comparison in the second layer, we include the intermediate thought that was not selected in the first layer. This mechanism ensures that the algorithm has the flexibility to revisit previous thoughts based on the comparison results from the LLM in each round.

We formulate the comparison-based ToT generation as an iterative interaction between the thought generation and the LLM. Take the C-ToT illustration in Figure~\ref{fig:illustration} as an example. The algorithm starts by generating intermediate thoughts \#1 to \#3 based on the input. Following a pairwise comparison mechanism, thoughts \#1 and \#3 are selected, leading to the generation of new intermediate thoughts, namely \#4 to \#8. In the second layer, thought \#2, thoughts \#4 and \#5 (linked to \#1), along with thoughts \#6 to \#8 (linked to \#3) are compared, subsequently resulting in the selection of thought \#2 and thought \#7 (linked to \#3), and the generation of the next intermediate thoughts.

Formally, we denote an intermediate thought by $\z$. In round $t$, $Z^t$ represents the set of candidate intermediate thoughts for comparison, and $\hat{Z}^t$ denotes the selected intermediate thoughts. In a sequence of $T$ rounds, in the first round the algorithm generates a set of thoughts $Z^1 = \{\z^1_i\}_{i=1}^{m}$ based on the query, where $m$ denotes the set size. Then, the comparison-based ToT selects $K$ most promising intermediate thoughts based on the comparison result from the LLM. We denote the selected set of thoughts by $\hat{Z}^1 = \{\z^1_j\}_{j\in[K]}$. In the second round, the algorithm generates the new intermediate thoughts based on each selected thought\footnote{Each newly generated intermediate thought will contain the information about all its parents}. After $T$ rounds, we can get $K$ most promising thoughts, and all of them contain information about their parent nodes, thus formulating as $K$ CoTs. Therefore, this iterative process facilitates the refinement and selection of thoughts over multiple rounds. For each pair of thoughts, we use a direct comparison to identify the more promising one. We call such a direct comparison method ``Standard Mode''. We summarize the proposed approach in Algorithm~\ref{alg:DoT}.

\begin{figure}[!t]
\vspace{-2mm}
\begin{minipage}{0.48\textwidth}
\begin{algorithm}[H]
\caption{C-ToT Algorithm}
\label{alg:DoT}
\begin{algorithmic}[1]
\STATE \textbf{Input:} Query $\x$, comparison times $n$, number of intermediate thoughts generation $m$, number of selected thoughts $K$, depth of the tree $T$.
\STATE Generate initial thoughts $Z^1$ of size $m$ with query $\x$
\FOR{$t=1$ {\bfseries to} $T$}
\IF{Standard Mode}
\STATE Pair thoughts in $Z^t$ randomly
\WHILE{$|Z^t|>K$}
\FOR{every pair $(a,b)$ in $Z^t$}
\STATE Compare thoughts $a$ and $b$ by LLM $n$ times, then take a majority vote. If $a$ wins, keep thought $a$ in $Z^t$ and drop $b$, and vice versa.
\ENDFOR
\ENDWHILE 
\STATE Denote $\hat{Z}^t$ by the remaining thoughts
\ELSE
\STATE Call Algorithm~\ref{alg:Knockout} to obtain $\hat{Z}^t$
\ENDIF
\STATE Generate the next $m$ thoughts for each thought in $\hat{Z}^t$.
\ENDFOR
\end{algorithmic}
\end{algorithm}
\end{minipage}
\vspace{-4mm}
\end{figure}

\begin{myRemark}[Comparison Complexity]
In our approach, we keep all previous intermediate thoughts to compare in each round. This may affect the operational efficiency and exceed the storage limit. To improve the efficiency, we can introduce a counter for each intermediate thought to track its comparison frequency. If the comparison count of an intermediate thought exceeds a threshold, we can remove it from the tree. Since the comparison in each round is independent of each other, we could use parallel computing to improve the efficiency of the algorithm, or exploit more efficient machine learning techniques to schedule computational resources more adaptively and efficiently~\citep{zhou2023theoretical} in the future. If the tree depth is $T$, the total number of comparisons required is less than the order of $\O(nTK\log(m))$. 
\end{myRemark}

\begin{myRemark}[Token Cost]
The token costs of our proposed C-ToT approach and the S-ToT approaches are task-specific and generally incomparable. The C-ToT approach could discover valuable but misevaluated previous intermediate thoughts earlier than the S-ToT method. However, it may introduce more token overhead as we compare these thoughts multiple times. Therefore, we are better suited to the problem where the initial intermediate thoughts are more uncertain. We provide the token cost analysis in \textbf{Appendix~\ref{app:experiment-cost}}.
\end{myRemark}

\subsection{Instantiations and Analysis}

In the proposed C-ToT framework, we use a direct comparison for each pair of thoughts. Although our C-ToT method explores two sample information compared to the S-ToT methods that use only single sample information, the comparison feedback could still be inaccurate.  We offer two methods to select the winning thought in a pair.

\textbf{Standard.} Suppose the comparison difficulty of each pair is the same. Inspired by the ensemble algorithms in CoT generation~\citep{ICLR2022:CoTensemble}, we can improve the robustness of the comparison feedback by setting $n>1$ in the ``Standard Mode'', so that we compare the two thoughts in each pair for $n$ times and take majority voting output.

\textbf{Dueling.} We consider a more general assumption of noisy comparisons, where we only assume an unknown ranking of the $M_t$ thoughts at round $t$. This implies that the comparison difficulty of each pair varies, requiring a different number of comparisons for each pair. If two thoughts $a$ and $b$ are compared, thought $a$ is chosen with some unknown probability $p(a, b)$ and $b$ is chosen with $p(b, a) = 1-p(a, b)$, where the higher-ranked one has probability $\geq 1/2$. Repeated comparisons are independent of each other.

We formulate it as a best-arm identification problem with dueling feedback~\citep{yue2012k,ICML2017:Knockout}, propose a dueling bandits instantiation of the C-ToT framework, and analyze its properties. For each pair, we keep the empirical probability $\hat{p}_a$, a proxy for $p(a, b)$. We also maintain a confidence value $\hat{c}$ s.t., w.h.p., $\hat{p}_a \in (p(a,b) - \hat{c}, p(a,b) + \hat{c})$. We stop the comparisons when it is sure of the winner or when it reaches its comparison budget $n$. If it reaches $n$ comparisons, it outputs the element with more wins, randomly breaking ties. During comparison, we also compare two elements $a$, $b$ with LLM by query them with different prompts. We summarize the proposed instantiation in Algorithm~\ref{alg:Knockout}. Here stochasticity $\gamma$ models the problem hardness. 

\begin{figure}[!t]
\vspace{-2mm}
\begin{minipage}{0.48\textwidth}
\begin{algorithm}[H]
\caption{Knockout}
\label{alg:Knockout}
\begin{algorithmic}[1]
\STATE \textbf{Input:} Set $Z$, bias $\epsilon$, confidence $\delta$, stochasticity $\gamma$, $i=1$
\WHILE{$|Z|>K$}
\STATE Pair thoughts in $Z$ randomly
\FOR{every pair $(a,b)$}
\STATE Set bias $\epsilon=\frac{(2^{1/3}-1)\epsilon}{\gamma 2^{i/3}}$, confidence $\delta=\frac{\delta}{2^i}$, $\hat{p}_a=1/2$, $\hat{c}=1/2$, $n=\frac{1}{2\epsilon^2}\log\frac{2}{\epsilon}$, $r=0$, $w_a=0$
\WHILE{$|\hat{p}_a-1/2| \leq \hat{c}-\epsilon$ and $r \leq n$}
\STATE Compare thoughts $a$ and $b$ by LLM. if thought $a$ wins, $w_a = w_a+1$, and vice versa.
\STATE $r=r+1$, $\hat{p}_a=\frac{w_a}{r}$, $\hat{c}=\sqrt{\frac{1}{2r}\log\frac{4r^2}{\delta}}$
\IF{$\hat{p}_a \leq 1/2$}
\STATE Keep thought $b$ in $Z$ and drop $a$, break.
\ELSE
\STATE Keep thought $a$ in $Z$ and drop $b$, break.
\ENDIF
\ENDWHILE
\ENDFOR
\STATE $i=i+1$
\ENDWHILE
\STATE Return $\hat{Z}$ by the remaining thoughts
\end{algorithmic}
\end{algorithm}
\end{minipage}
\vspace{-4mm}
\end{figure}

\textbf{Analysis.}
First, we introduce some definitions. Given a set of thoughts $Z=\{\z_1, ..., \z_M\}$ of size $M$. Suppose there is an unknown underlying ranking function $r: \mathcal{Z} \mapsto \mathbb{N}$ that ranks all the thoughts. Let $r(\z_1), ..., r(\z_M)$ be the ranking of the thoughts, such that when two elements $\z_a$ and $\z_b$ are compared, the higher ranked one is selected first, e.g. $r(\z_a) < r(\z_b)$. We define the $\epsilon$-maximum via the $(\epsilon, \delta)$-PAC paradigm, which requires that the output is likely to be close to the intended value. Specifically, given $\epsilon>0, \delta>0$, with probability $\geq 1-\delta$, the maximum selection must produce an element $a$ such that for $b$, with $r(b) = M$, $p(a,b)\geq \frac{1}{2}-\epsilon$. We call such an output $\epsilon$-maximum. 

\begin{myLemma}[Theorem 3 in~\cite{ICML2017:Knockout}]
\label{Lemma:Knockout}
Knockout$(Z, \epsilon, \delta)$ uses $\O(\frac{\gamma^2|Z|}{\epsilon^2}\log\frac{1}{\delta})$ comparisons and with probability at least $1-\delta$, outputs an $\epsilon$-maximum.
\end{myLemma}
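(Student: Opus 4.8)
The plan is to recognize that this lemma is a direct restatement of Theorem~3 of \citet{ICML2017:Knockout}, specialized to our setting where the ``set'' is the collection of intermediate thoughts $Z$ and the ``comparison oracle'' is the LLM queried under the General Noisy Comparison Assumption (an unknown total order $r$, with the higher-ranked element winning each independent pairwise duel with probability $\geq 1/2$, and problem hardness controlled by the stochasticity parameter $\gamma$). So the first step is simply to verify that Algorithm~\ref{alg:Knockout} as written is faithful to the Knockout procedure analyzed there: the outer \textbf{while} loop runs $\lceil \log_2(|Z|/K)\rceil$ rounds, each round halves the surviving set by random pairing, and in round $i$ the per-pair bias budget is tightened to $\epsilon_i = (2^{1/3}-1)\epsilon/(\gamma 2^{i/3})$ and the per-pair failure budget to $\delta_i = \delta/2^i$. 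One should check that these are exactly the schedules used in the reference, since the whole argument hinges on them.

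Given that, the proof decomposes into two parts, correctness and sample complexity, each handled round by round. For \textbf{correctness}: in a single round, each pair $(a,b)$ is resolved by the inner sequential test, which maintains $\hat p_a = w_a/r$ and a confidence radius $\hat c = \sqrt{(1/2r)\log(4r^2/\delta)}$, stopping as soon as $|\hat p_a - 1/2| > \hat c - \epsilon$ or the budget $n = \frac{1}{2\epsilon^2}\log\frac{2}{\epsilon}$ is exhausted. A union bound over all $r$ (the $4r^2$ in the log makes $\sum_r \delta/(2r^2)$ converge) shows that with probability $\geq 1-\delta_i$ the true duel probability $p(a,b)$ lies in $(\hat p_a - \hat c,\ \hat p_a + \hat c)$ throughout, hence the element kept is an $\epsilon_i$-approximate winner of that pair. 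One then propagates this up the tournament: if $a^\star$ is the true maximum and it enters round $i$, the element that beats it (if any) has duel probability against $a^\star$ at least $1/2 - \epsilon_i$, so by the $\gamma$-``stochastic triangle inequality'' / transitivity-type property assumed on $p(\cdot,\cdot)$ (the same structural assumption used in \citet{ICML2017:Knockout}), the surviving champion's gap to $a^\star$ grows by at most $\gamma\epsilon_i$ per round; summing the geometric series $\sum_i \gamma\epsilon_i = \sum_i (2^{1/3}-1)\epsilon\, 2^{-i/3} = \epsilon$ gives that the final output is an $\epsilon$-maximum. A union bound over the $\lceil\log_2(|Z|/K)\rceil$ rounds using $\sum_i \delta_i \leq \delta$ controls the total failure probability.

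For the \textbf{sample complexity}: in round $i$ there are about $|Z|/2^i$ pairs, and each pair costs $O(\epsilon_i^{-2}\log(1/\delta_i))$ comparisons (the sequential test never exceeds its budget $n_i \asymp \epsilon_i^{-2}\log(1/\epsilon_i)$, and the $(\epsilon_i,\delta_i)$-PAC bound gives this order). Substituting $\epsilon_i \asymp \epsilon/(\gamma 2^{i/3})$ and $\delta_i = \delta/2^i$, the round-$i$ cost is
\[
\frac{|Z|}{2^i}\cdot O\!\left(\frac{\gamma^2 2^{2i/3}}{\epsilon^2}\log\frac{2^i}{\delta}\right)
= O\!\left(\frac{\gamma^2 |Z|}{\epsilon^2}\,2^{-i/3}\log\frac{2^i}{\delta}\right),
\]
and summing over $i \geq 1$ the factor $2^{-i/3}$ dominates the $\log 2^i = i\log 2$ growth, so the geometric series converges and the total is $O\!\left(\frac{\gamma^2|Z|}{\epsilon^2}\log\frac{1}{\delta}\right)$, as claimed.

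The main obstacle is not any single calculation but making the reduction airtight: one must state precisely which structural property of the noisy comparison probabilities $p(\cdot,\cdot)$ is being imported from \citet{ICML2017:Knockout} (a relaxed stochastic transitivity / triangle-inequality condition governed by $\gamma$) and confirm that the LLM-as-oracle model in the General Noisy Comparison Assumption is consistent with it; once that is in place, the correctness propagation and the summation of the two geometric series are routine. Since the statement is explicitly ``Theorem~3 in~\cite{ICML2017:Knockout},'' the cleanest write-up simply invokes that theorem after noting the identification of $Z$, the oracle, and the parameters $\epsilon,\delta,\gamma$, and remarks that the $K$-selection variant (stopping the outer loop at $|Z| \leq K$ rather than $|Z|=1$) only decreases the number of rounds and hence the bound.
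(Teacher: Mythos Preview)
Your proposal is correct and follows essentially the same route as the paper's own proof: both establish a per-round guarantee (the paper isolates this as Lemma~\ref{lem:knockout_round}, bounding $\tilde p(z^*,k^*)\le\gamma\epsilon$ via Chernoff and strong stochastic transitivity), then union-bound over rounds with $\delta_i=\delta/2^i$, telescope the per-round biases $\epsilon_i\propto \epsilon/2^{i/3}$ through the stochastic triangle inequality to get the final $\epsilon$-maximum guarantee, and sum the same geometric series $\sum_i 2^{-i/3}(i+\log(2/\delta))$ for the comparison count. The paper likewise reproduces the argument from \citet{ICML2017:Knockout} rather than merely citing it, so your suggestion to just invoke Theorem~3 after identifying $Z$, the oracle, and $(\epsilon,\delta,\gamma)$ is if anything cleaner than what the paper does.
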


\begin{myProp}
\label{Prop:CToT}
Suppose that the depth of the tree is $T$, and thoughts in the shallower layers are more promising than those in the deeper ones. Then, the probability of missing the $\epsilon$-maximum promising thoughts in the $\tau$-th layer is $1-\delta^{\tau}$ with at most $\O(\frac{\gamma^2\sum_{i=1}^{T}|Z^i|}{\epsilon^2}\log\frac{1}{\delta})$ comparisons required for generating the whole tree of thoughts.
\end{myProp}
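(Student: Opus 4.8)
The plan is to view the $T$-round C-ToT procedure as $T$ successive, independent invocations of Knockout, one per layer, and to track when a given layer's $\epsilon$-maximum can be "lost." First I would fix the $\tau$-th layer and ask what has to go right for its $\epsilon$-maximum promising thought to survive to the end. Since we keep all previously unselected thoughts in the repository and re-enter them into the comparison pool at every subsequent round, a thought that enters the pool at round $\tau$ participates in the Knockout calls of rounds $\tau, \tau+1, \dots, T$ — that is, in $T-\tau+1$ rounds. By Lemma~\ref{Lemma:Knockout}, each single call of Knockout$(Z^i,\epsilon,\delta)$ returns an $\epsilon$-maximum (hence retains any current $\epsilon$-maximum element) with probability at least $1-\delta$. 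The thought from layer $\tau$ is retained through all remaining rounds precisely when each of these $T-\tau+1$ independent calls succeeds, which, by independence of repeated comparisons, happens with probability at least $(1-\delta)^{T-\tau+1}$; equivalently the probability of missing it is at most $1-(1-\delta)^{T-\tau+1}$, matching the stated bound $1-\delta^{(T-\tau+1)}$ under the exponent convention used in the statement.

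Next I would assemble the comparison-count bound. In round $i$ the pool fed to Knockout has size $|Z^i|$, and by Lemma~\ref{Lemma:Knockout} that call uses $\O\!\big(\tfrac{\gamma^2|Z^i|}{\epsilon^2}\log\tfrac1\delta\big)$ comparisons; each "comparison" here is itself realized by querying the LLM with up to $n$ different prompts, contributing the factor $n$. Summing over $i=1,\dots,T$ and pulling the common factors out of the sum gives a total of $\O\!\big(\tfrac{\gamma^2 n \sum_{i=1}^{T}|Z^i|}{\epsilon^2}\log\tfrac1\delta\big)$ comparisons, which is exactly the claimed budget. I would also note here that the per-round bias and confidence rescalings inside Algorithm~\ref{alg:Knockout} (the $2^{i/3}$ and $2^i$ factors) are what let Lemma~\ref{Lemma:Knockout}'s guarantee be invoked with the \emph{same} target $(\epsilon,\delta)$ for every round, so the union-style accounting above is legitimate.

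The main obstacle I expect is the bookkeeping around what "the $\epsilon$-maximum of layer $\tau$" means once thoughts from different layers are mixed in one pool: Lemma~\ref{Lemma:Knockout} is stated for a fixed set with a fixed underlying ranking, whereas here the ranking function $r$ is defined over the whole thought space and the pool $Z^i$ changes each round as new children are generated. I would handle this by observing that the PAC guarantee of Knockout is relative to the bottom-ranked element of whatever set it is run on, and that adding more thoughts only makes the comparison problem no easier in a way already absorbed by the $|Z^i|$ dependence; the key structural fact is simply that an element, once in the repository, is never silently dropped, so its survival probability is a genuine product of per-round success probabilities. A secondary subtlety is the exact reading of "$\delta^{(T-\tau+1)}$" versus $(1-\delta)^{T-\tau+1}$ and of "we round Knockout each round" (a typo for "we \emph{run} Knockout each round"); I would state the clean version $1-(1-\delta)^{T-\tau+1}$ and remark that it is $\le (T-\tau+1)\delta$ by a union bound, which is the form most useful downstream. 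Beyond these points the argument is a direct composition of Lemma~\ref{Lemma:Knockout} with independence, and no new calculation is needed.
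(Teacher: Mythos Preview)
Your high-level decomposition (invoke Lemma~\ref{Lemma:Knockout} once per layer and compose over the $T-\tau+1$ layers in which a layer-$\tau$ thought participates) is exactly what the paper does, and your comparison-count argument---sum the per-round $\O(\gamma^2|Z^i|\epsilon^{-2}\log(1/\delta))$ bounds and multiply by the $n$ LLM queries per comparison---matches the paper's one-line derivation.

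The probability accounting, however, is inverted relative to the paper's intended reading. You model ``not missed'' as the event that \emph{every} one of the $T-\tau+1$ Knockout calls succeeds, giving miss probability at most $1-(1-\delta)^{T-\tau+1}$. But the repository mechanism means an unselected thought is \emph{re-entered} into the next round's pool rather than discarded; consequently the layer-$\tau$ $\epsilon$-maximum is missed only if it fails to be selected in \emph{all} of the $T-\tau+1$ rounds it appears in. That is an intersection of failure events, each of probability at most $\delta$, so the miss probability is at most $\delta^{\,T-\tau+1}$ (equivalently, the not-miss probability is at least $1-\delta^{\,T-\tau+1}$, which is how the paper's expression should be read). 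Your bound $1-(1-\delta)^{T-\tau+1}\approx (T-\tau+1)\delta$ \emph{worsens} as $\tau$ decreases, whereas the paper's $\delta^{\,T-\tau+1}$ \emph{improves}; the latter is precisely the qualitative point made in Remark~3 (``especially for the thoughts in the shallow layer''), and your bound contradicts it. So the two expressions do not match under any exponent convention---the discrepancy is a genuine difference in which event you are bounding.

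Your caveats about the pool changing across rounds and about the phrasing of the statement are well taken; the paper's own proof is two sentences and does not address them either. But for alignment with the paper you should replace the ``all rounds succeed'' product $(1-\delta)^{T-\tau+1}$ by the ``all rounds fail'' product $\delta^{\,T-\tau+1}$.
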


\begin{myRemark}
Proposition~\ref{Prop:CToT} is directly derived from Lemma~\ref{Lemma:Knockout} by the union bound. Proposition~\ref{Prop:CToT} shows that, under the general assumption of noisy comparisons and utilizing our proposed pairwise-comparison approach, valuable intermediate thoughts will still not be overlooked, especially for the thoughts in the shallow layer, which may be more uncertain as they appear at the beginning of the ToT generation. We leave the detailed proofs to \textbf{Appendix~\ref{app:proofs}}.
\end{myRemark}


\section{Experiments}
\label{sec:experiments}

We test our proposed algorithm in three real-world tasks: \emph{question answering}~(QA), as well as mathematical reasoning tasks, namely, the Game of 24 and Sudoku Puzzles. The LLM employed in experiments is GPT-3.5-turbo-1106.

\subsection{Experiment Setup}
\label{sec:experiment-setup}

\textbf{Contenders Setup.}
Our evaluation firstly includes a comparison with a baseline method that directly queries the LLM for the final result (we denote it as Direct); three state-of-the-art contenders: CoT~\citep{NIPS2022:CoT}, SC-CoT~\citep{ICLR2022:CoTensemble}, and SToT~\citep{NIPS2023:ToT}. For the CoT method, we query the LLM directly to get the final answer, following the settings as in~\citep{NIPS2022:CoT}. For the SC-CoT method, for a fair comparison, if without further notice, we set the CoT number approximately the same number of tokens with our proposed algorithm. Specifically, 15 samples were generated for each question, using the same settings as in~\citep{ICLR2022:CoTensemble}, with the final answer determined by majority voting. The SToT approach is also implemented identically to the setting in~\citet{NIPS2023:ToT}. For the depth of the tree in the SToT algorithm, we set it equal to the depth with our proposed algorithms C-ToT (Stand.) and C-ToT (Duel.). 

We further propose three contenders for comparison to test the effectiveness of our proposed algorithm. A robust implementation of SToT is proposed that follows the main idea of SC-CoT to account for noise in the LLM feedback, called SC-SToT. Two variants of SToT that equipped with our proposed mechanisms are also included, denoted as Comp-SToT, Back-SToT. Specifically,
\begin{itemize}%
    \vspace{-2mm}
    \item[(1)] SC-SToT: We denote the self-consistent variant of the ToT algorithm as SC-SToT, short for Self-Consistent ToT algorithm. To alleviate feedback noise in the ToT algorithm, we draw inspiration from the self-consistent CoT generation algorithm~\citep{ICLR2022:CoTensemble}. Our proposal involves querying the LLM multiple times during intermediate thought evaluation in the ToT algorithm and using the majority voting results as the final evaluation in the SC-SToT. This contender is a direct extension of the ToT algorithm to account for the feedback noise of the LLM, but the cost is very high as it scores each intermediate thought multiple times;
    \vspace{-2mm}
    \item[(2)] Comp-SToT: We replace the score-based evaluation in the original SToT algorithm with our proposed pairwise comparison approach and refer to this variant algorithm as Comp-SToT.
    \vspace{-2mm}
    \item[(3)] Back-SToT: We replace the search algorithm in the original SToT algorithm with our proposed mechanism, which retains all previous intermediate thoughts with their corresponding socres and takes the highest scoring thoughts as the most promising ones. We call this variant algorithm as Back-SToT.
    \vspace{-2mm}
\end{itemize}

In addition, we include three state-of-the-art algorithms into comparison: PoT~\citep{chen2023program}, Self-Refine~\citep{madaan2023self} and GoT~\citep{besta2023:GoT}. Detailed experimental results can be found in \textbf{Appendix~\ref{app:experiments-summary}}.

For our proposed C-ToT approaches, we denote the C-ToT algorithm in ``Standard Mode'' by C-ToT (Stand.) and set the number of comparisons $n$ to 1. We denote the C-ToT algorithm that considers the general comparison noise by C-ToT (Duel.), and set the maximum number of comparisons to 3 and set $\gamma =0.1$. All experiments are repeated 3 times.

\textbf{Intermediate Thoughts Generation.} In general, different tasks should have different thought generators. Exploiting problem properties is essential to effectively design the intermediate thoughts. We follow the setting of~\cite{NIPS2023:ToT} to generate the intermediate thoughts. For example, we generate the thoughts as a few words, as in QA; as a line of equations, as in the Game of 24; or as an intermediate solution in the Sudoku puzzle. We defer the implementation for prompts and the cost comparison of our approaches and other contenders to \textbf{Appendix~\ref{app:experiment-implementation}} and \textbf{Appendix~\ref{app:experiment-cost}}.

\subsection{Question Answering}
\textbf{Task Setup.} 
We first test the performance of our proposed algorithm on the question answering tasks using the AQuA dataset~\citep{ling2017program}, which comprises 254 arithmetic reasoning tasks aimed at assessing logical abilities through various mathematical computation problems. Each question in this dataset is accompanied by five multiple-choice options, labeled from A to E. We follow the experimental protocol as it is in the work of~\citet{ICLR2022:CoTensemble}. The accuracy of the responses is gauged by comparing the generated answers with the standard solutions. Results on other QA datasets can be found in \textbf{Appendix~\ref{app:experiments-summary}}.

\textbf{C-ToT Setup.}
For the AQuA dataset, we set the maximum depth of tree of thoughts to 3. For each intermediate thought selected, we set $m=12$, thus generating 12 new intermediate thoughts as the next step. The maximum number of selected thoughts $K$ per layer is set to 3. Therefore, starting from the ``question'' as the root, all newly generated thoughts are compared, and we select 3 most promising intermediate thoughts to generate the next step.

In the question answering task, it is difficult to set a fixed length for the C-ToT, i.e., an intermediate thought may already summarize the answer before reaching the maximum length of the C-ToT. For those selected intermediate thoughts that have already reached an answer, we add them to the ``answer list'' and do not include them in the comparison in the next round. For those intermediate thoughts that have already reached an answer but were not selected, we will include them in the comparison in the next round. This mechanism thus gives excluded answers a chance to be included in the ``answer list'' by subsequent comparisons. After $T$ rounds of thought generation and comparison for selection, the selected chains are appended to the ``answer list'', and a majority voting mechanism is used on the ``answer list'' to determine the final answer. We leave the implementation details of the thought generator and the comparison prompt for the question answering task to \textbf{Appendix~\ref{app:experiment-implementation-QA}}.

\begin{figure}
    \centering
\begin{minipage}[b]{.34\linewidth}
\centering
\resizebox{.99\textwidth}{!}{
\begin{tabular}{lc}
\hline
Method & Accuracy \\
\hline
Direct & 24.8\% \\
CoT & 42.3\% \\
SC-CoT & 58.4\% \\
SToT & 57.1\% \\
\hline
SC-SToT & 57.6\% \\
Comp-SToT & 59.0\% \\
Back-SToT & 58.0\% \\
C-ToT (Stand.) & \textbf{61.4}\% \\
C-ToT (Duel.) & \textbf{63.0}\% \\
\hline
\end{tabular}
}
\captionof{table}{Average accuracy on AQuA.}
\label{tab:QA}
\end{minipage}%
\hspace{8mm}
\begin{minipage}[b]{.46\linewidth}
\centering
\resizebox{.99\textwidth}{!}{
\includegraphics[clip, trim=0.25 0.09cm 0.28cm 0.54cm, width=0.99\textwidth]{./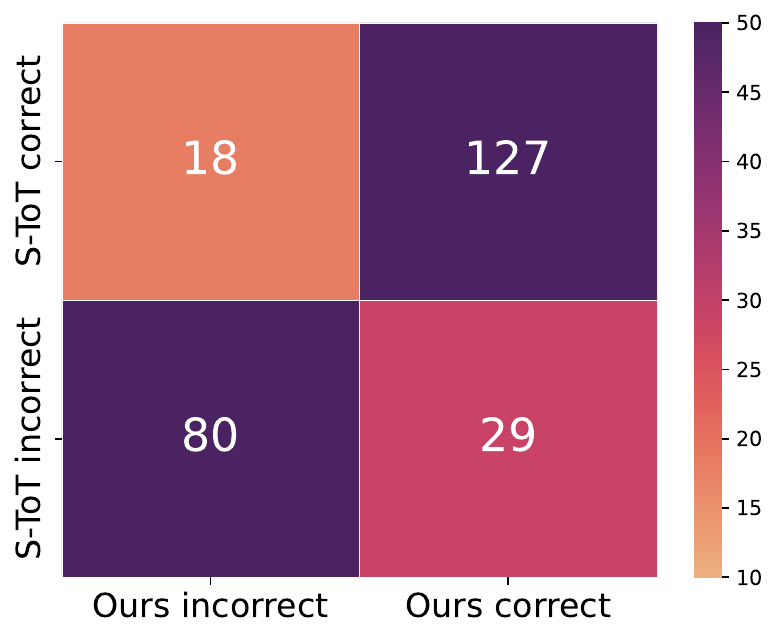}
}
\vspace{-6mm}
\caption{Predictions of SToT and Comp-SToT on AQuA.}
\label{fig:qa}
\end{minipage}
\vspace{-4mm}
\end{figure}

\textbf{Comparison Results.} We report the comparison results of our proposed approaches with other contenders in Table~\ref{tab:QA}. All CoT approaches outperform the Direct query method, showing the importance of designing effective CoTs to guide LLMs from simplicity to complexity. We can also observe that the SC-SToT method outperforms the original SToT method, where the algorithm scores the intermediate thoughts multiple times to alleviate the noise. However, this mechanism will significantly increase the token cost. We leave the detailed discussion of token cost to \textbf{Appnedix~\ref{app:experiment-cost}}. 

Both proposed variants of SToT methods achieve higher average accuracy than the original SToT method. In Comp-SToT, the point-wise scoring mechanism is replaced by a pairwise comparison, while in Back-SToT, our backtracking search algorithm replaces the original search algorithm, taking into account all previous intermediate thoughts. These results demonstrate the effectiveness of these two mechanisms, such that the proposed C-ToT (Stand.) and C-ToT (Duel.) approaches outperform all contenders. Moreover, C-ToT (Duel.) achieves superior performance by further modeling noise in the comparison.

We delve deeper to explore the benefits of the pairwise comparison mechanism to test whether it can better find the most promising intermediate thoughts. Note that we do not have access to the underlying value or order of the intermediate thoughts in each round. Therefore, we use the final prediction error as a proxy, since the depth of the tree structure in the QA datasets is shallow, limited to 1 to 3 levels. We quantify the number of QA problems correctly/incorrectly predicted by ToT and Comp-SToT and report it in Figure~\ref{fig:qa}.  Our observation shows that Comp-SToT predicts more correctly when S-ToT predicts incorrectly, showing the superiority of the pairwise comparison mechanism over the pointwise scoring mechanism. This validates the rationale of pairwise comparison mechanism.

\subsection{Game of 24}

\textbf{Task setup.}
This is a math problem where the goal is to use four numbers and basic arithmetic operations $\{+, -, *, /\}$ to get a sum of 24. For example, given the input $\{4, 9, 10, 13\}$, a viable solution would be $(10 - 4) * (13 - 9) = 24$. In our experiments, we use the same dataset as in the work of~\citet{NIPS2023:ToT} and follow their experimental setup, which consists of 1,362 problems taken from the 24-point game on 4nums.com. We have selected questions numbered 401 to 500 as our question set. Each problem consisted of four numbers selected from 1 to 13, and the goal is to formulate a calculation using these numbers to reach a total of 24. The accuracy of the solutions is scored based on whether all 4 input numbers were used and whether the result is 24.

\textbf{C-ToT Setup.}
In this task, we restore unselected intermediate thoughts in previous layers and apply pruning when the current node is inferior to previously unselected ones. We set the maximum depth of tree to 6. The computation terminates either when an answer containing the number 24 is derived, or when the maximum layer limit is reached.

We set the number of selected intermediate thoughts per layer $K=5$ and let the LLM generate a variable number of new thoughts. If the total number of new thoughts is less than or equal to twice the maximum number of selected thoughts, thoughts are moved from the ``remain list'' (a list that stores reserved thoughts) to the new node list until the number of new thoughts reaches twice the maximum number of selected thoughts or the ``remain list'' is emptied. 

We also optimize the pruning process that takes place before the comparison stage and apply it to all contenders to save tokens. Newly generated intermediate thoughts with a single number unequal to 24 is eliminated. The rest are then filtered by comparison, selecting a number of new thoughts equal to or less than the maximum number of selected thoughts. Thoughts that are not selected are added to the ``remain list''. If one of the selected thoughts contains the final answer, it is added to the ``answer list'' and the interactive process is stopped. We leave the implementation details of the thought generator and the comparison prompt to \textbf{Appendix~\ref{app:experiment-implementation-24}}.

\begin{figure}
    \centering
\begin{minipage}[b]{.34\linewidth}
\centering
\resizebox{.99\textwidth}{!}{
\begin{tabular}{lc}
\hline
Method & Accuracy \\
\hline
Direct & 8.0\% \\
CoT & 4.3\% \\
SC-CoT & 8.0\% \\
SToT & 34.3\% \\
\hline
SC-SToT & 40.0\% \\
Comp-SToT & 36.3\% \\
Back-SToT & 39.0\% \\
C-ToT (Stand.) & \textbf{40.0}\% \\
C-ToT (Duel.) & \textbf{41.0}\% \\
\hline
\end{tabular}
}
\captionof{table}{Average accuracy on Game of 24.}
\label{tab:24}
\end{minipage}%
\hspace{8mm}
\begin{minipage}[b]{.46\linewidth}
\centering
\includegraphics[clip, trim=0.17 0.22cm 0.22cm 0.12cm, width=0.99\textwidth]{./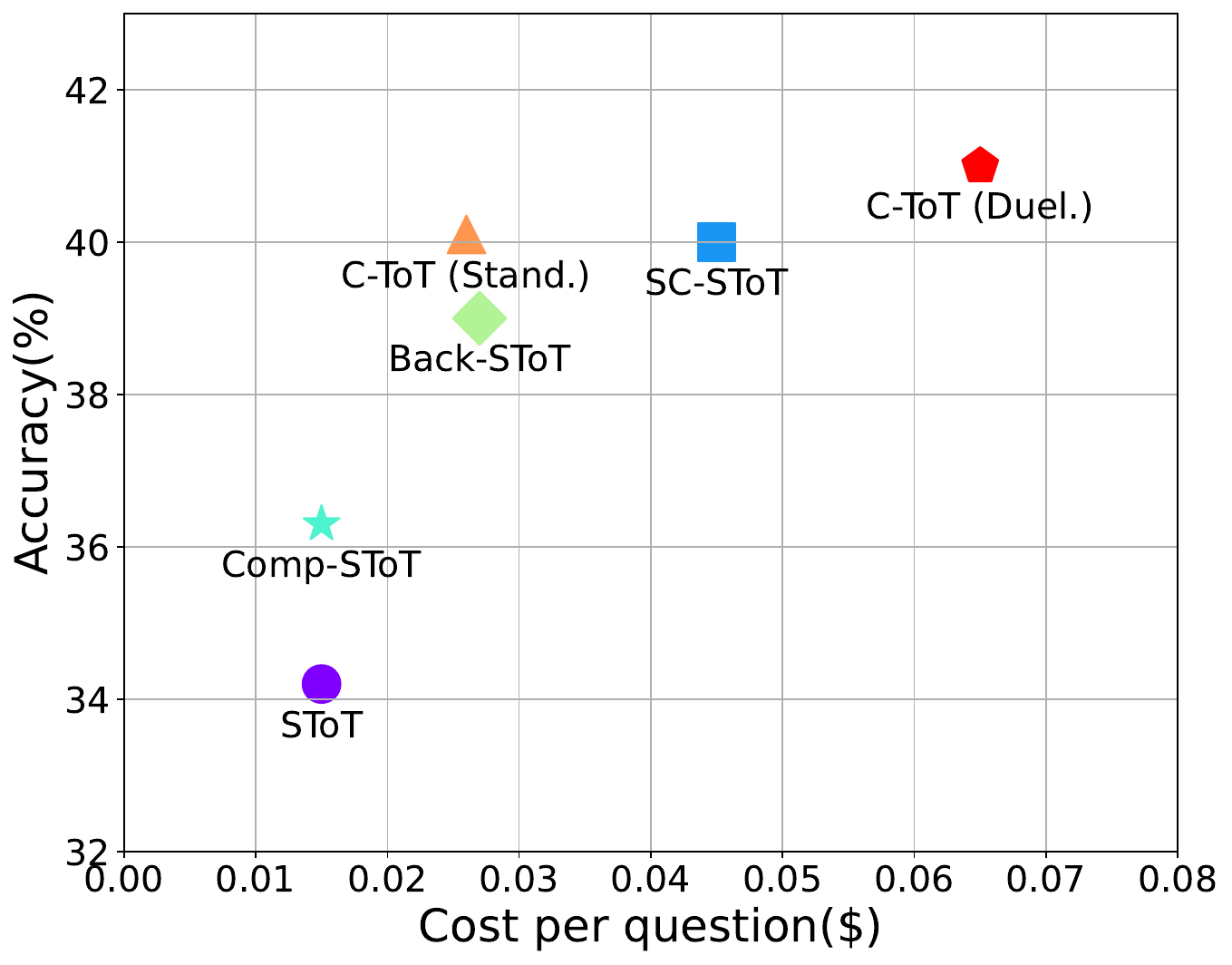}
\vspace{-6mm}
\caption{Accuracy and token costs of differenta methods.}
\label{fig:cost}
\end{minipage}
\vspace{-4mm}
\end{figure}

\textbf{Comparison Results.} 
We report the overall comparison results in Table~\ref{tab:24}. We observed trends similar to those reported in the QA task. We find that both the SToT and CToT approaches significantly outperform the CoT-based methods, indicating the need to interact with the LLM to generate a more powerful chain of thoughts to handle complex reasoning tasks. The SC-SToT method improves the performance of SToT, while the Comp-SToT can achieve a similar improvement with pairwise comparison, indicating the superiority of the comparison mechanism. Therefore, the combination of the comparison mechanism and the specific design for comparison noise allows the C-ToT (Duel.) to achieve the highest average accuracy.

We also report the average accuracy of different methods against their token cost in Figure~\ref{fig:cost}. We can observe that the token cost per question of the Comp-SToT algorithm is the same as that of the SToT algorithm, but it achieves a better performance. In practice, we can reduce the token cost by using a counter for each thought to track its comparison frequency. If the comparison count of a thought exceeds a threshold, we can remove it from the tree. We also leave the detailed discussion of token cost to \textbf{Appnedix~\ref{app:experiment-cost}}.

\subsection{Sudoku Puzzle}

\textbf{Task Setup.}
We use the Sudoku Dataset~\citep{long2023:ToT_long}, containing 10 Sudoku puzzles each of 3x3, 4x4, and 5x5 dimensions. Each puzzle is partially filled with numbers, and the task is to complete the entire Sudoku grid without changing the given numbers. The correctness of the solutions was determined by whether a complete and correct Sudoku grid is generated.

\textbf{C-ToT Setup.}
In this task, we test our proposed algorithm against other competitors in Sudoku puzzles of three different sizes. In each case, we set the maximum depth of tree of thoughts to 15. The computation stops either when the correct Sudoku solution is derived or when the maximum number of steps is reached. For each intermediate thought selected, we set $m=5$, thus generating 5 new intermediate thoughts as the next step. The maximum number of selected thoughts $K$ per layer is also set to 3. 

From the newly generated thoughts, a number equal to or less than the maximum allowed was selected by comparison. A pruning strategy was used to check for and eliminate thoughts containing results that did not meet the Sudoku requirements, such as duplicate numbers in the same row or column. These non-compliant results were removed from both the selected and unselected thoughts. The remaining unselected thoughts were then added to the ``remain list''.

If the number of selected thoughts was less than the maximum, additional thoughts were moved from the ``remain list'' to the ``select list'' until either the maximum number was reached or the ``remain list'' was emptied. We then checked whether the ``select list'' contained a correct solution. If a correct solution was found, it was added to the ``answer list'' and the program was terminated. We leave the implementation details of the thought generator and the comparison prompt to \textbf{Appendix~\ref{app:experiment-implementation-sudoku}}.

\begin{figure}
\centering
\begin{minipage}[b]{.86\linewidth}
\centering
\resizebox{.99\textwidth}{!}{
\begin{tabular}{lccc}
\hline
Method & Acc. $3\times 3$ & Acc. $4\times 4$ & Acc. $5\times 5$\\
\hline
Direct & 56.7\% & 37.7\% & 16.7\%\\
CoT & 73.3\% & 36.7\% & 23.3\%\\
SC-CoT & 76.7\% & 50.0\% & 16.7\%\\
SToT & 86.7\% & 46.7\% & 46.7\%\\
\hline
SC-SToT & 96.7\% & 53.3\% & 50.0\%\\
Comp-SToT & 100.0\% & 46.7\% & 50.0\%\\
Back-SToT & 100.0\% & 60.0\% & 56.7\%\\
C-ToT (Stand.) & \textbf{100.0}\% & \textbf{63.3}\% & \textbf{60.0}\%\\
C-ToT (Duel.) & \textbf{100.0}\% & \textbf{63.3}\% & \textbf{63.3}\%\\
\hline
\end{tabular}
}
\captionof{table}{Average accuracy on Sudoku Puzzles.}
\label{tab:sudoku}
\end{minipage}%
\vspace{-4mm}
\end{figure}

\textbf{Comparison Results.} 
We report the comparison results in Table~\ref{tab:sudoku}. In all three tasks, our proposed approaches, C-ToT (Stand.) and C-ToT (Duel.), consistently achieve the highest average accuracy, demonstrating their superior ability to handle complex reasoning tasks. While the SToT method generally outperforms the Direct method and the CoT method, it does not always outperform the SC-ToT method, as seen in the $4\times 4$ Sudoku task. This phenomenon may be due to the potential noise in pointwise scoring methods, which could mislead the subsequent generation of intermediate thoughts. The SC-ToT method introduces thought ensembles, which naturally mitigate the noise in the LLM feedback. Our proposed methods consistently outperform other contenders, suggesting that pairwise comparison mechanism effectively mitigates noise in LLM feedback, identifies the most promising intermediate thoughts, and improves the generated chain compared to SToT-based methods. We can observe that the C-ToT (Stand.) algorithm achieves the same performance as the C-ToT (Duel.) algorithm, which indicates that a single pairwise comparison could already provide reliable feedback in the Sudoku puzzle tasks.

\section{Conclusion}
\label{sec:conclusion}
This paper investigates a widespread but understudied problem of noisy feedback from LLMs in CoT generation tasks. Motivated by Vapnik's principle, we argue that for LLMs, the simultaneous comparison of two thoughts provides a more robust evaluation compared to individual value evaluations, and thus we propose a pairwise-comparison ToT approach C-ToT, approaching to searching for the most promising intermediate thought. The proposed method directly selects the most promising intermediate thought by pairwise comparison, and incorporates previous thoughts into the comparison to allow for rethinking. To further alleviate the noise in the comparison, we propose two variants of the C-ToT algorithm, and analyze the theoretical properties. Experiments on three real-world mathematical and reasoning tasks show the effectiveness of our proposed algorithm and verify the rationale of the pairwise comparison.

\section*{Acknowledgments}
HY was supported by Cisco Faculty Research Award. MS was supported by JST CREST Grant Number JPMJCR18A2. 

\section*{Impact Statement}
This research investigates a general problem of CoT generation with any LLM, where we take into account the noise in the feedback of the LLM. Therefore, when using LLMs for complex mathematical or logical reasoning problems, the user could benefit from our study from the aspect of generating a more effective CoT. The consequences of system failure and bias in the data are not applicable.


\bibliography{myRefs}
\bibliographystyle{icml2024}

\newpage
\appendix
\onecolumn


\section{Implementation Details}
\label{app:experiment-implementation}

In this section, we provide implementation details for all experiments, focusing primarily on the design of the thought generation and comparison prompts for each task.

\subsection{Question Answering}
\label{app:experiment-implementation-QA}
\textbf{Thought Generator.} 
We use a zero-shot prompt. For each question, we use the same prompt multiple times to generate a specified number of different new thoughts.
\begin{lstlisting}
Prompt = 'Here is a question. You should work on it step by step. Your answer must be only the alphabet of your choice and begin with ###. For example: ###A, which should be at the last line. Q: {question}'
\end{lstlisting}

\textbf{Comparison Prompt.} We use multiple different prompts to generate the comparison result at each round. For the QA problem, we use three different prompts. We evaluate the same thought three times by using each prompt once and take the majority as the answer.
\begin{lstlisting}
Prompt 1 = 'You should judge which of the two analysis is better. You must only reply 1 or 2. 
    1: {input_1} 
    2: {input_2}'
Prompt 2 = 'Find out which of the two analysis is better. You must only reply 1 or 2. 
    1: {input_1} 
    2: {input_2}'
Prompt 3 = 'Compare the two analysis and find which is better. You must only reply 1 or 2. 
    1: {input_1} 
    2: {input_2}'
\end{lstlisting}

\subsection{Game of 24}
\label{app:experiment-implementation-24}
\textbf{Thought Generator.} 
In this experiment, we use prompts similar to the ToT experiment to generate thoughts. There are two prompts: one is to select two numbers from the remaining list for the next step in the 24-point calculation, and then to add the newly obtained number back into the remaining list of numbers. The other is to generate the total operation formula that results in 24, based on all previous steps, when only one number remains. Both prompts are few-shot.
\begin{lstlisting}
Prompt 1 = 'You should choose two of the input numbers and use basic arithmetic operations (+ - * /) to obtain a new number. The new number should replace those two input numbers. Give me at least 6 possible next steps.
    Input: 2 8 8 14
    Possible next steps:
    2 + 8 = 10 (left: 8 10 14)
    8 / 2 = 4 (left: 4 8 14)
    14 + 2 = 16 (left: 8 8 16)
    2 * 8 = 16 (left: 8 14 16)
    8 - 2 = 6 (left: 6 8 14)
    14 - 8 = 6 (left: 2 6 8)
    14 /  2 = 7 (left: 7 8 8)
    14 - 2 = 12 (left: 8 8 12)
    Input: {input}
    Possible next steps:'
    
Prompt 2 = 'Use numbers and basic arithmetic operations (+ - * /) to obtain 24. Each step, you are only allowed to choose two of the remaining numbers to obtain a new number.
    Input: 4 4 6 8
    Steps:
    4 + 8 = 12 (left: 4 6 12)
    6 - 4 = 2 (left: 2 12)
    2 * 12 = 24 (left: 24)
    Answer: (6 - 4) * (4 + 8) = 24
    Input: 2 9 10 12
    Steps:
    12 * 2 = 24 (left: 9 10 24)
    10 - 9 = 1 (left: 1 24)
    24 * 1 = 24 (left: 24)
    Answer: (12 * 2) * (10 - 9) = 24
    Input: {input}'
    
\end{lstlisting}

\textbf{Comparison Prompt.} We use multiple different prompts to generate the comparison result at each round. For the 24 problem, we use three different prompts. All prompts are few-shot. We evaluate the same thought three times by using each prompt once and take the majority as the answer.
\begin{lstlisting}
Prompt 1 = 'I will give you two groups of numbers. The evaluation criteria is if using all of the given numbers with basic arithmetic operations (+ - * /) can reach 24. You should compare the two inputs and decide which input is better. You should only reply 1 or 2. 
    input_1: 2 12
    2 * 12 = 24
    input_2: 11 12
    all arithmetic operations can't get 24
    Answer: 1
    input_1: 1 2 4
    too small
    input_2: 3 8
    3 * 8 =24
    Answer: 2
    input_1: 1 12 11
    1 + 12 + 11 = 24
    input_2: 12 12
    12 + 12 = 24
    Both can reach 24, randomly select one
    Answer: 1
    input_1: {input_1}
    input_2: {input_2}
    Answer: '

Prompt 2 = 'I will give you two groups of numbers. Tell me which input is better. The better one is more possible to reach 24 by using all of the given numbers with basic arithmetic operations (+ - * /). You should only reply 1 or 2.
    //same examples
    input_1: {input_1}
    input_2: {input_2}
    Answer: '

Prompt 3 = 'Here are two groups of numbers. Tell me which input is more possible to use all of the given numbers with basic arithmetic operations (+ - * /) to get 24. You should only reply 1 or 2. Don't add any explanation.
    //same examples
    input_1: {input_1}
    input_2: {input_2}
    Answer: '
    
\end{lstlisting}

\subsection{Sudoku Puzzle}
\label{app:experiment-implementation-sudoku}
\textbf{Thought Generator.} 
We use the following prompt to generate thoughts.
\begin{lstlisting}
Prompt = 'This is a {puzzle_size}x{puzzle_size} two-dimensional array represents a matrix, where some numbers are already given, and '*' represents the numbers that need to be filled in. You should pick 1 or 2 '*' to fill in a number between 1 to {puzzle_size}. Don't change the given number. Don't complete the whole puzzle immediately until there is only 1 or 2 '*' left to be filled in. Your answer should just be the same format as the question below. When you answer, begin with ###. For example: ###[[1, *, *], [*, 1, *], [*, 2, *]]
    Question: {question}'
\end{lstlisting}

\textbf{Comparison Prompt.} We use multiple different prompts to generate the comparison result at each round. For the Sudoku problem, we use three different prompts. All prompts are zero-shot.
\begin{lstlisting}
Prompt 1 = 'You should judge which of the two two-dimensional array better represents a {puzzle_size}x{puzzle_size} Sudoku puzzle. '*' means the value is yet to be decided. You should judge by considering if in each row or column 1 to {puzzle_size} could appear and only appear once. You must only reply 1 or 2. 
    1:{input_1}
    2:{input_2}'
    
Prompt 2 = 'Find which of the two two-dimensional array better represents a {puzzle_size}x{puzzle_size} Sudoku puzzle. '*' means the value hasn't been decided. The better one should satisfy that in each row or column 1 to {puzzle_size} could appear and only appear once. You must only reply 1 or 2.
    1:{input_1}
    2:{input_2}'
    
Prompt 3 = 'Which of the two two-dimensional array better represents a {puzzle_size}x{puzzle_size} Sudoku puzzle? '*' means the value is yet to be decided. A better one means in each row or column 1 to {puzzle_size} could appear and only appear once. You must only reply 1 or 2.
    1:{input_1}
    2:{input_2}'

\end{lstlisting}

\section{Proofs}
\label{app:proofs}

We first introduce the following lemma before our main proof.
\begin{myLemma}[Lemma 2 in~\cite{ICML2017:Knockout}]
\label{lem:knockout_round}
Let $\tilde{p}(i,j)= p(i,j)-1/2$ be the additional probability by which i is preferable to j. Let $z^*$ be the maximum in $Z$ and $k^*$ be the comparison winner. The comparison algorithm on set $Z$ uses $\frac{|Z|}{4\epsilon^2}\log\frac{2}{\delta}$ comparisons and with probability $\geq 1-\delta$,
\[\tilde{p}(z^*,k^*)\leq \gamma\epsilon.\]
\end{myLemma}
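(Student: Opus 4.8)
This statement is Lemma~2 of~\cite{ICML2017:Knockout}, and the plan is to reproduce its proof in the notation of this paper. The argument splits into two layers: a concentration analysis of the inner pairwise test of Algorithm~\ref{alg:Knockout} (the loop that races a thought $a$ against a thought $b$, maintaining the empirical win rate $\hat p_a=w_a/r$ and the confidence radius $\hat c$), and a lifting of the resulting per-pair guarantee to the single bracket round that outputs $k^*$.

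First I would analyze one call of the pairwise test on a fixed pair $(a,b)$; write $p:=p(a,b)$. After $r$ comparisons $\hat p_a$ is the average of $r$ i.i.d.\ Bernoulli$(p)$ variables, so Hoeffding's inequality gives $\Pr[\,|\hat p_a-p|>\hat c_r\,]\le 2e^{-2r\hat c_r^2}=\delta/(2r^2)$ for the scheduled radius $\hat c_r=\sqrt{\tfrac{1}{2r}\log\tfrac{4r^2}{\delta}}$; summing over $r\ge 1$ (and using $\sum_{r\ge 1}r^{-2}<2$), with probability at least $1-\delta$ the bound $p\in(\hat p_a-\hat c_r,\hat p_a+\hat c_r)$ holds simultaneously for every $r$, which is exactly the event the maintained value $\hat c$ is designed to certify. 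On this event I would split on the termination cause. If the loop stops because the interval has detached from $\tfrac12$, say $\hat p_a-\tfrac12>\hat c-\epsilon$ (so $a$ is kept), then $p\ge\hat p_a-\hat c>\tfrac12-\epsilon$, i.e.\ $\tilde p(a,b)>-\epsilon$; the case where $b$ is kept is symmetric. If instead the budget $n=\tfrac{1}{2\epsilon^2}\log\tfrac2\delta$ runs out, then $|\hat p_a-\tfrac12|\le\hat c_n-\epsilon$, and combining with $|p-\hat p_a|\le\hat c_n$ gives $|p-\tfrac12|\le 2\hat c_n-\epsilon$, a quantity the budget $n$ is calibrated to keep at most $\gamma\epsilon$. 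In every case at most $n$ comparisons are used and, on the good event, the test returns a $w\in\{a,b\}$ with $p(w,u)\ge\tfrac12-\gamma\epsilon$, where $u$ is the other element of the pair.

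Next I would pass to the round and count. The round pairs the $|Z|$ thoughts uniformly at random, and since $z^*$ is the maximum of $Z$ it is the better element of whichever pair it falls into; applying the previous step to that pair (with the relaxed stochastic-transitivity constant $\gamma$ already folded into the guarantee) yields $\tilde p(z^*,k^*)\le\gamma\epsilon$ with probability $\ge 1-\delta$, where $k^*$ is the winner of $z^*$'s comparison. Running the test on all $|Z|/2$ pairs of the round costs at most $\tfrac{|Z|}{2}\cdot\tfrac{1}{2\epsilon^2}\log\tfrac2\delta=\tfrac{|Z|}{4\epsilon^2}\log\tfrac2\delta$ comparisons, which is the stated bound; if the $\gamma\epsilon$-guarantee is wanted simultaneously for all pairs one uses per-pair confidence $2\delta/|Z|$ and a union bound, changing only the logarithmic factor. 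The hard part will be the budget-exhaustion branch: showing that an \emph{inconclusive} sequential test, after spending its $n$ comparisons, still leaves $a$ and $b$ within a $\gamma\epsilon$ near-tie forces a tight matching of $n$, the schedule $\hat c_r$, and $\epsilon$, and is precisely where the hardness parameter $\gamma$ is consumed. The rest — Hoeffding, the $\sum r^{-2}$ union bound, and the arithmetic of the comparison count — is routine.
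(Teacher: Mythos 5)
Your concentration analysis of the inner test (Hoeffding with the confidence schedule $\hat{c}_r=\sqrt{\tfrac{1}{2r}\log\tfrac{4r^2}{\delta}}$, the $\sum_r \delta/(4r^2)$ union bound, and the count $\tfrac{|Z|}{2}\cdot\tfrac{1}{2\epsilon^2}\log\tfrac{2}{\delta}=\tfrac{|Z|}{4\epsilon^2}\log\tfrac{2}{\delta}$) matches the paper's. The genuine gap is in how you obtain the factor $\gamma$, and it sits exactly where you flagged the "hard part." You claim the budget-exhaustion branch yields $|p-\tfrac12|\le 2\hat{c}_n-\epsilon\le\gamma\epsilon$ by calibration of $n$. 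That is false: with $n=\tfrac{1}{2\epsilon^2}\log\tfrac{2}{\delta}$ one gets $\hat{c}_n=\epsilon\sqrt{\log(4n^2/\delta)/\log(2/\delta)}\ge\epsilon$, so $2\hat{c}_n-\epsilon\ge\epsilon$; no budget choice can push a purely statistical near-tie bound below $\epsilon$, let alone to $\gamma\epsilon$. More fundamentally, $\gamma$ is not a statistical quantity at all — it is the relaxed stochastic-transitivity parameter of the comparison model, and it cannot be "folded into" the per-pair guarantee. The correct per-pair guarantee is only: with probability $\ge 1-\delta$, the better element of a pair loses only if that pair's preference gap is below $\epsilon$.

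The missing step is the round-level argument, which also hinges on the definition of $k^*$: it is the maximum of the \emph{surviving} half of $Z$ after the round, not merely the winner of $z^*$'s own pair (were it the latter, the conclusion would be $\tilde{p}(z^*,k^*)\le\epsilon$ with no $\gamma$ and the lemma would have no content). Let $a$ be the element paired with $z^*$. Either $\tilde{p}(z^*,a)\ge\epsilon$, in which case with probability $\ge 1-\delta$ the element $z^*$ wins and survives, so $k^*=z^*$ and $\tilde{p}(z^*,k^*)=0$; or $\tilde{p}(z^*,a)<\epsilon$, in which case the winner of that pair survives, hence $r(a)\le r(\mathrm{winner}(z^*,a))\le r(k^*)\le r(z^*)$, and strong stochastic transitivity applied to the triple $(a,k^*,z^*)$ gives $\tilde{p}(z^*,k^*)\le\gamma\,\tilde{p}(z^*,a)\le\gamma\epsilon$. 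Without this transitivity step the chain from "$z^*$ lost only to a near-tied $a$" to "the best survivor is near-tied with $z^*$" does not close, since $k^*$ need not be $a$ nor anything ever compared against $z^*$.
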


\begin{proof}[Proof of Lemma~\ref{lem:knockout_round}]
To make this paper self-contained, we provide the proofs in~\cite{ICML2017:Knockout} here. First, we prove that the probability of the direct pairwise comparison process providing a wrong winner is less than $\delta$. Let $\hat{p}_i^r$ and $\hat{c}^r$ denote $\hat{p}_i$ and $\hat{c}$ respectively after $r$ number of comparisons. The output of the pairwise comparison will not be $i$ only if $\hat{p}_i^r < \frac12 + \epsilon - \hat{c}^r$ for any $r<m=\frac{1}{2\epsilon^2}\log\frac{2}{\delta}$ or if $\hat{p_i} < \frac12$ for $r = m$.

Considering the first case, after $r$ comparisons, by Chernoff bound,
\[
\Pr(\hat{p}_i^r < \frac12 + \epsilon - \hat{c}^r) \le e^{-2r(\hat{c}^r)^2} = e^{-\log \frac{4r^2}{\delta}} = \frac{\delta}{4r^2}.
\]
Using union bound,
\[
\Pr(\exists r \text{ s.t. }\hat{p}_i^r \le \frac12+\epsilon - \hat{c}^r) \le \frac{\delta}{2}.
\]
Considering the second case, after $m=\frac1{2\epsilon^2}\log\frac{2}{\delta}$ rounds, by Chernoff bound,
\begin{equation}
    \label{eqn:Chernoff}
    Pr(\hat{p}_i^m < \frac12) \le e^{-2m\epsilon^2} = \frac{\delta}{2}.
\end{equation}
Thus, the probability of each of these events happening is bounded by $\frac{\delta}{2}$, and thus the probability of the pairwise comparison process providing a wrong winner is less than $\delta$.

As each of the $|Z|/2$ pairs is compared at most $\frac{1}{2\epsilon^2}\log \frac2{\delta}$ times, the total comparisons is less than $\frac{|Z|}{4\epsilon^2}\log \frac2{\delta}$. Let $k^*$ be the comparison winner and $z^*$ be the maximum in $Z$. Let $a$ be the element paired with $z^*$. There are
two cases: $\tilde{p}(z^*,a) \ge \epsilon$ and $\tilde{p}(z^*,a) < \epsilon$.

If $\tilde{p}(z^*,a) \ge \epsilon$, by Eqn. \eqref{eqn:Chernoff} with
probability $\ge1-\delta$, $z^*$ will win and hence by definitions of
$z^*$ and $k^*$, $\tilde{p}(z^*, k^*) = 0 \le \gamma \epsilon$.
Alternatively, if $\tilde{p}(z^*,a) < \epsilon$, let $\text{winner}(i,j)$ denote
the winner between $i$ and $j$ when compared for
$\frac{1}{2\epsilon^2} \log \frac1{\delta}$ times. Then,
\[
r(a) \stackrel{(a)}\le r(\text{winner}(z^*,a)) \stackrel{(b)}\le r(k^*)
\stackrel{(c)}\le r(z^*)
\]
where (a) follows from $r(a) \le r(z^*)$, (b) and (c) follow from the
definitions of $z^*$ and $k^*$ respectively.  From strong stochastic
tranisitivity on $a$, $k^*$ and $z^*$, $\tilde{p}(z^*, k^*) \le \gamma
\tilde{p}(z^*, a) \le \gamma \epsilon$.
\end{proof}

Now we begin to prove Lemma~\ref{Lemma:Knockout}.
\begin{proof}[Proof of Lemma~\ref{Lemma:Knockout}]
To make this paper self-contained, we also provide the proofs in~\cite{ICML2017:Knockout} here. We first show that with probability $\ge 1 - \delta$, the output of Knockout is an $\epsilon$-maximum. Let $\epsilon_i = c\epsilon/2^{i/3}$ and $\delta_i = \delta/2^i$. Note that $\epsilon_i$ and $\delta_i$ are bias and confidence values used in round $i$. Let $b_i$ be a maximum element in the set $Z$ before round $i$. Then by Lemma \ref{lem:knockout_round}, with probability $\ge1-\delta_i$,
\begin{align}
\tilde{p}(b_i, b_{i+1}) \leq \frac{c \epsilon}{2^{i/3}}.
\label{eq: knockout_step}
\end{align}

By union bound, denote by $p'$ the probability that Eqn.~\eqref{eq: knockout_step} does not hold for some round $1\le i\le \log |Z|$, then we have
\[
    p' \le \sum_{i=1}^{\log |Z|} \delta_i = \sum_{i=1}^{\log |Z|} \frac{\delta}{2^i} \le \delta.
\]

With probability $\ge 1- \delta$, Eqn.~\eqref{eq: knockout_step} holds for all $i$ and by stochastic triangle inequality,
\[
\tilde{p}(b_1, b_{ \log{|Z|} + 1 }) \leq \sum_{i=1}^{{\log |Z|}}
\tilde{p}(b_i, b_{i+1}) \leq \sum^\infty_{i=1}
\frac{c\epsilon}{2^{i/3}} = \epsilon.
\]

We now bound the number of comparisons. Let $n_i = \frac{|Z|}{2^{i-1}}$ be the number of elements in the set at the beginning of round $i$. Denote by $\textnormal{NC}_i$ the number of comparisons at round $i$, then we have
\[
    \textnormal{NC}_i \le \frac{n_i}{2} \cdot \frac{\gamma^22^{2i/3}}{2c^2\epsilon^2} \cdot \log \frac{2^{i+1}}{\delta}.
\]

Hence the number of comparisons in all rounds is
\begin{align*}
\sum^{ \log |Z|}_{i=1}\frac{|Z|}{2^{i}} \cdot
\frac{\gamma^22^{2i/3}}{2c^2\epsilon^2} \cdot \log \frac{2^{i+1}}{\delta}
& \leq \frac{|Z|\gamma^2}{ 2c^2\epsilon^2} \sum^{\infty}_{i=1}
\frac{1}{2^{i/3}} \left(i+ \log \frac2{\delta}\right) \\ 
& = \frac{|Z|\gamma^2}{ 2c^2\epsilon^2} \left(\frac{2^{1/3}}{c^2} + \frac{1}{c} \log \frac{2}{\delta} \right)  = \O\left(\frac{|Z|\gamma^2}{\epsilon^2}\log\frac{1}{\delta}\right).
\end{align*}

\end{proof}

Now we begin to prove Proposition~\ref{Prop:CToT}.

\begin{proof}[Proof of Proposition~\ref{Prop:CToT}]
In each round of comparison, according to Lemma~\ref{Lemma:Knockout}, we have the probability of $1-\delta$ to output the $\epsilon$-maximum thoughts. Suppose that the thoughts in the shallower layers are more promising than those in the deeper ones, which is often the case in step-by-step reasoning tasks. For example, $r([\z^1_j]) \geq r([\z^1_i, \z^2])$ for $j\in[K]$, $i\not\in[K]$, and $\z^2\in Z^2$. When generating the intermediate thoughts in the $\tau$-th layer, the probability that the $\epsilon$-maximum thoughts are not selected is $1 - \delta^{\tau}$.

Therefore, the probability of missing the $\epsilon$-maximum promising thoughts in the $\tau$-th layer is $1-\delta^{\tau}$ with at most $\O\left(\frac{\gamma^2\sum_{i=1}^{T}|Z^i|}{\epsilon^2}\log\frac{1}{\delta}\right)$ comparisons required for generating the tree. 
\end{proof}

\section{Additional Experimental Results}
\label{app:experiment}

\subsection{Experimental Results Summary}
\label{app:experiments-summary}
First, we present a summary of the experimental results.

We introduce three additional QA datasets—Gsm8k~\citep{cobbe2021training}, Coin Flip (OOD)~\citep{NIPS2022:CoT}, and BBH~\citep{srivastava2023beyond}—along with two more state-of-the-art algorithms for comparison, with implementation details provided below.

\textbf{Contenders.} PoT~\citep{chen2023program}: PoT requires in-context samples to guide LLMs generating Python code step-by-step, and the number of samples is a hyperparameter. Since SToT and C-ToT do not always require such samples, we choose appropriate number of samples in PoT to maintain experimental consistency with other methods. We use one in-context sample for all datasets expecting Game of 24, which uses 3 samples.

Self-Refine~\citep{madaan2023self}: The parameter of in Self-Refine is the number of iterations of Self-Refine. For each task, we set the number of iterations to 3 so that the number of tokens used is approximately the same. We use the same template as in the original paper and set the number of in-context examples to the same as in the PoT.

We can observe in Table~\ref{tab:experiment-overview} that the proposed C-ToT (Stand.) and C-ToT (Duel.) algorithms achieve the best performance in almost all tasks. The Self-Refine also achieves promising results in QA tasks, while it performs relatively poorly compared to the SToT and C-ToT algorithms in complex tasks that require step-by-step interaction and reasoning, such as Game of 24.

\begin{figure}[h]
    \centering
\begin{minipage}[b]{.9\linewidth}
\centering
\resizebox{.99\textwidth}{!}{
\begin{tabular}{lcccccc}
\hline
Method / Data & Gsm8k & Coin flip (OOD) & BBH & AQuA & Game of 24 & Sudoku\\
\hline
CoT & 68.8 $\pm$ 2.5 & 56.3 $\pm$ 2.1 & 67.7 $\pm$ 2.6 & 42.3 $\pm$ 2.5 & 4.3 $\pm$ 3.2 & 44.4 $\pm$ 2.3\\
SToT & 59.3 $\pm$ 2.4 & 62.1 $\pm$ 2.9 & 69.6 $\pm$ 2.0 & 57.1 $\pm$ 2.1 & 34.3 $\pm$ 3.9 & 60.0 $\pm$ 3.8\\
PoT & 62.2 $\pm$ 3.3 & \textbf{71.1 $\pm$ 2.8} & 66.7 $\pm$ 2.3 & 47.5 $\pm$ 4.5 & 27.2 $\pm$ 3.7 & 43.1 $\pm$ 3.9\\
Self-Refine & 67.6 $\pm$ 3.2 & 64.8 $\pm$ 1.6 & 69.2 $\pm$ 3.7 & 56.2 $\pm$ 4.1 & 16.3 $\pm$ 3.9 & 52.3 $\pm$ 3.0\\
\hline
C-ToT (Stand.) & 71.3 $\pm$ 3.3 & 66.2 $\pm$ 2.8 & 75.7 $\pm$ 1.9 & 61.4 $\pm$ 2.9 & 40.0 $\pm$ 4.6 & 74.4 $\pm$ 2.1\\
C-ToT (Duel.) & \textbf{73.0 $\pm$ 2.7} & 70.4 $\pm$ 2.1 & \textbf{80.0 $\pm$ 2.2} & \textbf{63.0 $\pm$ 2.6} & \textbf{41.0 $\pm$ 3.0} & \textbf{75.4 $\pm$ 3.3}\\
\hline
\end{tabular}
}
\captionof{table}{Performance comparisons on benchmark datasets. On each dataset, 5 test runs were conducted and the average accuracy as well as standard deviation are presented, and the best one is emphasized in bold.}
\label{tab:experiment-overview}
\end{minipage}%
\end{figure}

Our method is generally not comparable to GoT~\cite{besta2023:GoT}, but the idea of comparison can improve GoT. The GoT divides a complex task into several subtasks, generate thoughts for each subtask and merge them, while the ToT-style methods aim at step-by-step thinking, where the new generated thought is based on previous ones, so it is hard to split or merge. Thus, they are better suited for different tasks. For example, for sorting, we can divide it into subtasks; while for the Game of 24, the answer should be generated based on previous thoughts, so C-ToT is more proper.

We can use the idea of comparison to improve the solving of subtasks, so we can subsequently improve the GoT. The following is a preliminary study of the sorting task in the GoT paper, remaining all its setups, and comparing the performance with scoring mechanism by LLM and the pairwise comparison mechanism by LLM in subtasks. Accuracy and token usage are reported as in Table~\ref{tab:experiment-GoT}. 5 test runs were conducted and the average accuracy and the number of tokens (completion tokens / prompt tokens) are reported.

\begin{figure}[h]
    \centering
\begin{minipage}[b]{.5\linewidth}
\centering
\resizebox{.99\textwidth}{!}{
\begin{tabular}{lcc}
\hline
Data / Method & S-GoT & C-GoT\\
\hline
32 elements & 90.4\%; 1232/12726 & 90.5\%; 1217/14969\\
64 elements & 86.1\%; 2592/31118 & 86.6\%; 2356/35313\\
\hline
\end{tabular}
}
\captionof{table}{Performance comparisons with GoT.}
\label{tab:experiment-GoT}
\end{minipage}%
\end{figure}

\subsection{Cost and Efficiency}
\label{app:experiment-cost}
In Table~\ref{tab:cost-QA},~\ref{tab:cost-24} and~\ref{tab:cost-sudoku}, we report the token cost and average accuracy comparison of our proposed approaches with other contenders. 5 test runs were conducted and the average accuracy and the number of tokens (completion tokens / prompt tokens) are reported. 

We can observe that the token costs of our proposed C-ToT approaches and the contenders are task-specific and generally incomparable. Since we preserve all previous intermediate thoughts in the QA task, the token cost of the C-ToT approaches is higher than that of the S-ToT algorithm, but the performance is simultaneously improved. In the QA and Game of 24, the token costs of the Comp-SToT and S-ToT approaches are comparable, but the Comp-SToT approaches achieve better average accuracy. In the Sudoku puzzle, the token cost of Comp-SToT is lower than that of S-ToT. These results indicate the effectiveness of using the direct pairwise comparison approach to find the most promising intermediate thoughts. In practice, we can further reduce the token cost by using a counter for each intermediate thought to track its comparison frequency. If the comparison count of an intermediate thought exceeds a threshold, we can remove it from the tree.

\begin{figure}[h]
    \centering
\begin{minipage}[b]{.6\linewidth}
\centering
\resizebox{.99\textwidth}{!}{
\begin{tabular}{lccc}
\hline
Method & Generate/Prompt tokens & Cost per case & Accuracy\\
\hline
CoT & 106/136 & 0.0003 & 42.3\% \\
SC-CoT & 1647/2023 & 0.0054 & 58.4\%\\
SToT & 1551/5415 & 0.0085 & 57.1\%\\
SC-SToT & 2081/13459 & 0.018 & 57.6\%\\
\hline
Comp-SToT & 1515/6299 & 0.0093 & 59.0\%\\
Back-SToT & 1551/8135 & 0.011 & 58.0\%\\
C-ToT (Stand.) & 1498/14627 & 0.017 & 61.4\%\\
C-ToT (Duel.) & 1649/52044 & 0.055 & 63.0\%\\
\hline
\end{tabular}
}
\captionof{table}{Average accuracy of different methods with token costs on QA.}
\label{tab:cost-QA}
\end{minipage}%
\end{figure}

\begin{figure}[h]
    \centering
\begin{minipage}[b]{.6\linewidth}
\centering
\resizebox{.99\textwidth}{!}{
\begin{tabular}{lccc}
\hline
Method & Generate/Prompt tokens & Cost per case & Accuracy\\
\hline
CoT & 99/437 & 0.0006 & 4.3\% \\
SC-CoT & 1717/6555 & 0.010 & 8.0\%\\
SToT & 1368/12205 & 0.015 & 34.3\%\\
SC-SToT & 2284/40825 & 0.045 & 40.0\%\\
\hline
Comp-SToT & 1309/11963 & 0.015 & 36.3\%\\
Back-SToT & 1679/23178 & 0.027 & 39.0\%\\
C-ToT (Stand.) & 2452/21003 & 0.026 & 40.0\%\\
C-ToT (Duel.) & 2174/60578 & 0.065 & 41.0\%\\
\hline
\end{tabular}
}
\captionof{table}{Average accuracy of different methods with token costs on Game of 24.}
\label{tab:cost-24}
\end{minipage}%
\end{figure}

\begin{figure}[h]
    \centering
\begin{minipage}[b]{.6\linewidth}
\centering
\resizebox{.99\textwidth}{!}{
\begin{tabular}{lccc}
\hline
Method & Generate/Prompt tokens & Cost per case & Accuracy\\
\hline
CoT & 431/178 & 0.001 & 44.4\% \\
SC-CoT & 6292/2666 & 0.015 & 47.8\%\\
SToT & 6309/23933 & 0.037 & 60.0\%\\
SC-SToT & 6568/70129 & 0.083 & 66.7\%\\
\hline
Comp-SToT & 2666/13164 & 0.019 & 65.6\%\\
Back-SToT & 4536/21383 & 0.030 & 72.2\%\\
C-ToT (Stand.) & 5340/29565 & 0.040 & 74.4\%\\
C-ToT (Duel.) & 7148/86425 & 0.101 & 75.5\%\\
\hline
\end{tabular}
}
\captionof{table}{Average accuracy of different methods with token costs on Sudoku.}
\label{tab:cost-sudoku}
\end{minipage}%
\end{figure}

\subsection{Ablation Studies}

We report the ablation studies on Aqua dataset, while we also observe the same trend in other datasets. 5 test runs were conducted and the average accuracy and the number of tokens (completion tokens / prompt tokens) are reported.

We study the number of intermediate thoughts generated and selected in each round and report the results in Table~\ref{tab:ablation-parameters}. With a fixed number of thoughts generated each round, selecting more thoughts leads to higher costs, while accuracy may not benefit much. With a fixed number of thoughts selected each round, generating more thoughts leads to a significant increase in accuracy because we can explore more thoughts. These results could benefit the further use of CoT methods.

\begin{figure}[h]
    \centering
\begin{minipage}[b]{.98\linewidth}
\centering
\resizebox{.99\textwidth}{!}{
\begin{tabular}{lccccc}
\hline
Generate  $m$ / Select  $K$ & $K=1$ & $K=2$ & $K=3$ & $K=5$ & $K=6$\\
\hline
$m=1$ & 41.6\% ; 50/515 & -- & -- & -- & -- \\
$m=3$ & 46.7\% ; 139/1525 & 50.2\% ; 244/2565 & 49.8\% ; 375/3658 & -- & --\\
$m=5$ & 50.1\% ; 288/2653 & 53.5\% ; 495/4452 & 55.0\% ; 549/5945 & 54.8\% ; 882/9379 & --\\
$m=10$ & 53.0\% ; 518/5189 & 56.6\% ; 896/8716 & 57.3\% ; 1091/11875 & 57.5\% ; 2120/19473 & 57.6\% ; 2155/20982\\
$m=12$ & 53.8\% ; 674/6334 & 57.4\% ; 1212/10732 & 61.4\% ; 1498/14627 & 61.4\% ; 1531/16002 & 61.5\% ; 2125/26329\\
\hline
\end{tabular}
}
\captionof{table}{Ablation studies on parameters.}
\label{tab:ablation-parameters}
\end{minipage}%
\end{figure}

We study the threshold for removing intermediate thoughts vs. accuracy and cost and report the results in Table~\ref{tab:ablation-thresholds}. Under the same tree depth, different thresholds show relatively stable performance.

\begin{figure}[h]
    \centering
\begin{minipage}[b]{.98\linewidth}
\centering
\resizebox{.99\textwidth}{!}{
\begin{tabular}{lccccc}
\hline
Depth  $d$ / Threshold Th & Th$=1$ & Th$=2$ & Th$=3$ & Th$=4$ & Th$=5$\\
\hline
$d=3$ & 58.0\% ; 1062/11337 & 60.7\% ;  1315/13611 & 61.0\% ; 1478/15111 & 61.4\% ; 1513/15713 & 61.4\% ; 1558/16107\\
$d=4$ & 58.8\% ; 1426/13792 & 60.9\% ; 1732/18822 & 61.4\% ; 1771/19940 & 61.6\% ; 1804/20003 & 61.3\% ; 2001/20312\\
$d=5$ & 60.0\% ; 1760/17556 & 61.0\% ; 2143/21006 & 61.6\% ; 2271/23412 & 61.7\% ; 2265/24031 & 61.3\% ; 2425/24755\\
\hline
\end{tabular}
}
\captionof{table}{Ablation studies on the number of thresholds for removing intermediate thoughts.}
\label{tab:ablation-thresholds}
\end{minipage}%
\end{figure}

We study the number of comparison and report the results in Table~\ref{tab:ablation-comparisons}. In general, increasing the number of comparison results in better accuracy and higher cost.

\begin{figure}[!h]
    \centering
\begin{minipage}[b]{.9\linewidth}
\centering
\resizebox{.99\textwidth}{!}{
\begin{tabular}{lcccc}
\hline
Dataset / Comparison $n$ & $n=1$ & $n=3$ & $n=5$ & $n=8$\\
\hline
Aqua & 61.4\% ; 1498/14627 & 63.0\% ;  1649/52044 & 64.3\% ; 1824/78194 & 64.7\% ; 1997/128802 \\
\hline
\end{tabular}
}
\captionof{table}{Ablation studies on the number of comparisons.}
\label{tab:ablation-comparisons}
\end{minipage}%
\end{figure}


\end{document}